\def\ps@IEEEtitlepagestyle{%
  \def\@oddfoot{\mycopyrightnotice}%
  \def\@evenfoot{}%
}
\def\mycopyrightnotice{%
  {\footnotesize 979-8-3503-9118-3/24/\$31.00 ~\copyright~2024 IEEE \hfill}
  \gdef\mycopyrightnotice{}
}
\newtheorem{theorem}{Theorem}[section]
\newtheorem{proposition}[theorem]{Proposition}
\newtheorem{definition}[theorem]{Definition}
\newtheorem{remark}[theorem]{Remark}
\def\BibTeX{{\rm B\kern-.05em{\sc i\kern-.025em b}\kern-.08em
    T\kern-.1667em\lower.7ex\hbox{E}\kern-.125emX}}
\newcommand\AtPageUpperMyright[1]{\AtPageUpperLeft{%
 \put(\LenToUnit{0.17\paperwidth},\LenToUnit{-2cm}){%
     \parbox{0.9\textwidth}{\raggedleft\fontsize{8}{11}\selectfont #1}}%
 }}%
\newcommand{\conf}[1]{%
\AddToShipoutPictureBG*{%
\AtPageUpperMyright{#1}
}
}
\begin{document}
\title{\vspace*{1cm} Logifold: A Geometrical Foundation of Ensemble Machine Learning\\
}

\author{
\IEEEauthorblockN{Inkee Jung}
\IEEEauthorblockA{\textit{Department of Mathematics and Statistics} \\
	\textit{Boston University}\\
	Boston, MA, USA \\
	inkeej@bu.edu}
	\and
\IEEEauthorblockN{Siu-Cheong Lau}
\IEEEauthorblockA{\textit{Department of Mathematics and Statistics} \\
	\textit{Boston University}\\
	Boston, MA, USA \\
	scllouis@bu.edu}

}
\maketitle
\conf{\textit{  Proc. of the International Conference on Electrical, Computer, Communications and Mechatronics Engineering (ICECCME 2024) \\ 
4-6 November 2024, Male, Maldives}}
\begin{abstract}
	We present a local-to-global and measure-theoretical approach to understanding datasets. The core idea is to formulate a logifold structure and to interpret network models with restricted domains as local charts of datasets.  In particular, this provides a mathematical foundation for ensemble machine learning. Our experiments demonstrate that logifolds can be implemented to identify fuzzy domains and improve accuracy compared to taking average of model outputs. Additionally, we provide a theoretical example of a logifold, highlighting the importance of restricting to domains of classifiers in an ensemble.
	
	
	\end{abstract}
\begin{IEEEkeywords}
Local to global principle, Neural Network, Ensemble Machine Learning, Fuzziness
\end{IEEEkeywords}

\section{Introduction}
The concept of a manifold has been broadly used in data science for interpolating data points (for instance, \cite{b4} gives an excellent overview of the topic).
Recently, the study of dataset using topological methods develops into an interesting research area, see for instance \cite{b5, b6, b7}.
In most applications, manifolds are understood as higher dimensional analogs of surfaces in the Euclidean space $\mathbb{R}^3$.

On the other hand, a crucial aspect of a manifold is the local-to-global perspective to study spaces, which is often overlooked in applications to data science.  In \cite{b1} and this paper, we would like to formulate a local-to-global approach to study datasets.

Manifolds can be expressed as zero loci of smooth functions, which are well approximated by polynomials. In contrast,
datasets are like `point clouds' and not locally Euclidean.  Thus, we propose to model a dataset by a measure space.  Moreover, inspired by the huge success of neural networks, we take the graphs of linear logical functions as local models.

Linear logical functions are defined via graphs and linear inequalities, whose targets are finite sets.  We will restrict their domains to measurable subsets of $\mathbb{R}^n$.  Functions obtained by artificial neural networks belong to this class.
Linear logical functions are universal, in the sense that they can approximate any measurable functions with a finite target set. Consequently, constructing an atlas from the graphs of logical functions leads to an analog of a topological manifold in this setting, which we call to be a \emph{logifold}.

Fuzziness is another important aspect in our formulation.  Logical functions used in machine learning exhibit the characteristics of fuzzy logic, with values in the range $[0,1]$ rather than $\{0,1\}$. Thus, the graph of a logical function exhibit fuzziness, leading to the notion of a fuzzy logifold.  

Logifold provides a mathematical foundation for ensemble machine learning. Ensemble machine learning takes a weighted average of several models.  It has shown impressive results in classification problems (\cite{b8,b9}).  Moreover, it is shown to reduce bias and variance for clearer decision boundaries (\cite{b10,b11,b12,b13}). 

Our logifold formulation emphasizes the importance of restricting to the domain of each model when we take average.  Otherwise wrong predictions of  a model outside its domain can seriously harm the average accuracy.  We will make a theoretical example of a logifold to show the limitation of averaging over models.

In practice, the certainty scores given by the softmax function provide some information about the domain of a model.  This will be the main ingredient of implementing a logifold in practice.  
However, certainty score only contains partial information about the domain.  For instance, under adversarial attacks \cite{b2}, models have extremely low accuracy at input points that it has high certainty scores.  Thus, whenever possible, we shall make use of other means to restrict the domains of models.
Some important ones are the target classes of a model, the background context assumed by the model, and the statistics (allowed range, expected mean and variation) of input values.  For instance, in our implementation of a logifold, we allow models to have different types of target classes.   This also helps to increase the diversity and specificity of models.  

We carry out experiments to show the advantages of the logifold formulation.  Applying a fuzzy logifold to the CIFAR10(\cite{b14}) classification problem shows that it could achieve better accuracy improvements compared to weighted average or majority voting. In another experiment, we form a dataset with 30 target classes, which is the union of three, namely MNIST(\cite{b15}), Fashion MNIST(\cite{b16}), and CIFAR10.  We apply a logifold formulation that consists of models with different type of targets and show that this achieves higher accuracy than taking average of models with all the 30 targets.

For readers who focus on applications, they may want to first go to Section \MakeUppercase{\romannumeral 5} on experiments, and Section \MakeUppercase{\romannumeral 3} on comparison with ensemble learning and why it is important to restrict to domains of local models. Section \MakeUppercase{\romannumeral 2} and \MakeUppercase{\romannumeral 4} are devoted to a quick review of the mathematical definitions required for logifolds and fuzzy logifolds. We hope this work also serves as a motivation and bridge between data applications and various branches of mathematics.

\section{Linear Logical function}\label{sec: linear logical function}
In this section, we define the notion of a \emph{linear logical function} using a directed graph and affine maps associated with each node.  It formulates network models as a class of functions.

Let $D$ be a subset of $\mathbb{R}^n$ and $T=\{t_1, \ldots, t_k\}$ a finite set.
Given an affine linear map $l : \mathbb{R}^n \to \mathbb{R}$, the inequalities $l \geq 0$ and $l < 0$ define chambers, and we take their intersections with $D$.
Since intersections and unions can be expressed using Boolean algebra, we can interpret membership in a chamber using propositional logic based on inequalities of affine maps, for instance $x \in \mathbb{R}^{2}_{(+,+)} \leftrightarrow (x_1 \geq 0)\wedge (x_2 \geq 0)$.
We now define what a \emph{linear logical graph} is.
\begin{definition}[Linear Logical Graph]\label{def: linear logical graph}
	Let $G$ be a finite directed acyclic graph with vertex set $V$, having only one source vertex and a finite number of target vertices $t_1,\ldots,t_k$.  For each vertex $v$ with more than one outgoing arrows, we fix an affine linear functions whose chambers in $D$ are one-to-one corresponding to the outgoing arrows of $v$.  Let $L$ be the collection of such chosen functions.  
	The pair $(G,L)$ is called a linear logical graph.
\end{definition}

Given a linear logical graph, we can construct a linear logical function from $D$ to $T$.
\begin{definition}[Linear Logical Function]\label{def: linear logical function}
	For a given $x \in D$, we start from the source vertex of $G$, and follow a path according to the following rules. At a vertex $v$,
	\begin{itemize}
		\item if there is only one outgoing arrow, we follow that arrow.
		\item if there are multiple outgoing arrows, we consider the chambers defined by affine maps $l_v$. We select the outgoing arrow corresponding to the chamber that $x$ lies in.
	\end{itemize}
	Since $G$ has no cycle and finite, the path is finite and will stop at a target vertex, which is associated to an element $t \in T$. 
	This defines a function $f_{G,L}(x) :=t$, which is called a linear logical function determined by $(G,L)$.
\end{definition}

In classification problems, $T$ is the set of labels for target classes, and the dataset is a subset of $D \times T$, representing a graph of a function from $D$ to $T$.
For a feed-forward network model where the activation function at each hidden layer is the ReLU function, and the last layer uses the index-max function, the model takes the form 
\[
f(x) = \sigma \circ L_N \circ r_{N-1} \circ L_{N-1} \circ \cdots \circ r_1 \circ L_1(x)
\]
where $L_i : R^{n_{i-1}} \to R^{n_i}$ are affine linear functions with $n_0 = n$, $r_i$ are the entrywise ReLU functions, and $\sigma$ is the index-max function.

\begin{proposition}
	There exists a linear logical graph $(G,L)$ whose function $f_{(G,L)}$ gives the above $f$ composed of affine linear functions, the ReLu function, and the index-max function.
\end{proposition}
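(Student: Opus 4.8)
The plan is to exploit the fact that a ReLU feed-forward network with an index-max read-out is a piecewise affine classifier: the domain $D$ decomposes into finitely many polyhedral cells, on each of which every activation sign is fixed and hence $f$ is constant. The strategy is to realize this cell decomposition as the sequence of chamber decisions made while traversing a linear logical graph $G$, choosing the affine function at each branching node so that following the arrows reproduces exactly the computation of $f$.

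First I would record the key elementary observation. For a single coordinate, $r(\cdot)=\max(\cdot,0)$ acts differently according to the sign of its argument: on $\{l \ge 0\}$ it is the identity and on $\{l<0\}$ it is zero, both affine. Consequently, if the sign of every pre-activation across all layers is fixed, then each ReLU is replaced by either the identity or the zero map in the corresponding coordinate, and the whole composite $L_N \circ r_{N-1}\circ\cdots\circ r_1\circ L_1$ collapses to a single affine map $x \mapsto M_s x + c_s$, where $s$ denotes the global activation pattern. This is the \emph{fixed pattern $\Rightarrow$ affine} fact, and its verification is just bookkeeping of matrix products with diagonal $0/1$ masks.

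Next I would build $G$ in layers, mirroring the forward pass. Starting from the source, I process the neurons of the first hidden layer one at a time: at each node I place the affine function equal to the relevant entry of $L_1$, whose two chambers $\{\ge 0\}$ and $\{<0\}$ determine the two outgoing arrows and thereby the sign of that neuron. After $n_1$ such decisions the layer-$1$ pattern is known, and on the reached cell $r_1\circ L_1$ has become affine; I then process the second layer, placing at each node the affine function obtained by substituting this already-determined affine expression into the corresponding entry of $L_2$. Iterating through layer $N-1$, every leaf of this tree corresponds to a complete activation pattern $s$, and along the path the composite up to $L_N$ has become the affine map $M_s x + c_s$. Finally I append, below each such leaf, a small gadget that computes the index-max of $M_s x + c_s$ by a sequence of pairwise comparisons $\{(M_s x + c_s)_i - (M_s x + c_s)_j \ge 0\}$ --- each an affine function of $x$ on this cell --- and route the arrows so that the target vertex reached is the one labeled by the winning index, i.e.\ by $f(x)$.

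It remains to check that $(G,L)$ is a legitimate linear logical graph and that $f_{(G,L)}=f$. The graph is finite, directed and acyclic (each arrow moves strictly forward through the layers and then through the comparison gadget), has a single source, and finitely many targets indexed by $T$; every branching node carries one affine function with exactly two chambers matching its two outgoing arrows, as required by Definition \ref{def: linear logical graph}. By construction the path taken by an input $x$ records precisely the activation pattern of $x$ under $f$ and then the index-max decision, so $f_{(G,L)}(x)=f(x)$ for every $x\in D$. I expect the main obstacle to be purely organizational rather than conceptual: one must argue carefully that the affine function attached to a node is well defined from the path leading to it --- that is, that conditioning on the partial activation pattern already decided renders the next pre-activation (and later the output differences) genuinely affine in $x$ --- and handle the boundary conventions of the inequalities $\ge$ versus $<$ consistently with the index-max tie-breaking rule. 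Empty chambers may simply be pruned, and the exponential blow-up in the number of nodes is irrelevant since only existence is claimed.
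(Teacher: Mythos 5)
Your proposal is correct and follows essentially the same route as the paper's proof: branch on the activation sign patterns layer by layer (using that a fixed pattern reduces the composite to an affine map on that chamber), then resolve the index-max via the pairwise differences $(M_s x + c_s)_i - (M_s x + c_s)_j$. The only differences are organizational --- you branch one neuron at a time (binary splits) and compare outputs sequentially, whereas the paper attaches the whole vector-valued affine map $L_k \circ r_{k-1} \circ \cdots \circ L_1$ (respectively $\tilde{L}_N$) to a single vertex per layer and branches multi-way over its chambers --- which does not change the substance of the argument.
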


\begin{proof}
The graph is constructed as follows.
The source vertex $v_0$ is equipped with the affine linear function $L_1$, from which $N$ outgoing arrows originate, corresponding to the $N$ chambers created by $L_1$.
These chambers match the possible values of the ReLu function $r_1$, forming a finite subset of $\{0,+\}^{n_1}$. 
Next, we take the composition function $L_2 \circ r_1 \circ L_1$ restricted to each chamber, each of which is affine linear, and assign this function to the vertex corresponding to the chamber.
Again, we create outgoing arrows corresponding to the chambers in $\mathbb{R}^n$ defined by this linear function.
Inductively we proceed through the layer of vertices correspoding to the chambers of $L_N \circ r_{N-1} \circ L_{N-1} \circ \cdots \circ r_1 \circ L_1$.
Let $L_N = (l_1, \ldots, l_{n_N})$, and define $\tilde{L}_N := \left(l_i - l_j : i<j\right)$.
At each vertex, the function $\tilde{L}_N \circ r_{N-1} \circ L_{N-1} \circ \cdots \circ r_1 \circ L_1$ restricted to the corresponding chamber is linear on $\mathbb{R}^n$, and we equip this function to the vertex and make outgoing arrows corresponding to the chambers of the function. We assign this function to the vertex and create outgoing arrows corresponding to the chambers of the function.
Within each chamber, the index $i$ that maximizes $l_i \circ r_{N-1} \circ L_{N-1} \circ \cdots \circ r_1 \circ L_1$ is determined, and we make an outgoing arrow from the corresponding vertex to the target vertex $t_i \in T$.
\end{proof}

Below figure \ref{fig} shows an example of such a  linear logical graph, which produce a linear logical function of the form $\sigma \circ L_2 \circ r_{1} \circ L_{1} : \mathbb{R}^2 \to \{t_1, t_2, t_3\}$.
This corresponds to a three-layer feed-forward network model ($N=3$) with $L_1 = (l_{11},l_{12}) : \mathbb{R}^2 \to \mathbb{R}^4$ and $L_2=(l_{21},l_{22},l_{23}) : \mathbb{R}^4 \to \mathbb{R}^3$.
\tikzset{%
	every neuron/.style={
		draw
	},
	neuron missing/.style={
		draw=none, 
		scale=4,
		text height=0.333cm,
		execute at begin node=\color{black}$\vdots$
	},
}

\begin{tikzpicture}[x=1.5cm, y=1.5cm, >=stealth]
	
	\foreach \m/\l [count=\y] in {1}
	\node [every neuron/.try, neuron \m/.try] (input-\m) at (0,0.3) {$v_0$};
	
	\foreach \m [count=\y] in {1,2,3,4}
	\node [every neuron/.try, neuron \m/.try ] (hidden-\m) at (2.5,2-\y*0.65) {$v_\y$};
	\foreach \m [count=\y] in {1,2,3,4}
	\node [scale=0.5] at (2.5,2.3-\y*0.65){$\tilde{L}_2 \circ r_1 \circ L_1$};

	\foreach \m [count=\y] in {1,2,3}
	\node [every neuron/.try, neuron \m/.try ] (output-\m) at (5,1.5-\y*0.6) {$t_\y$};
%
%
	
	
	
	\draw [->] (input-1) -- (hidden-1)
	node [above, midway,scale=0.5] {$(+,+)$};
	\draw [->] (input-1) -- (hidden-2)
	node [above,midway,scale=0.5] {$(+,-)$};
	\draw [->] (input-1) -- (hidden-3)
	node [above, midway,scale=0.5] {$(-,+)$};
	\draw [->] (input-1) -- (hidden-4)
	node [above, midway,scale=0.5] {$(-,-)$};
	\node at (0,0.7){$L_1$};
	
	\draw [->] (hidden-1) -- (output-2);
	\draw [->] (hidden-2) -- (output-2);
	\draw [->] (hidden-3) -- (output-1);
	\draw [->] (hidden-4) -- (output-3);
	
\end{tikzpicture}
\begin{figure}[htbp]
	\captionsetup{singlelinecheck=false, font=myfont}
	\caption{Example of a linear logical graph.}
	\label{fig}
\end{figure}

\subsection{Universality of linear logical functions and definition of a logifold.}
We introduce the universal approximation theorem for linear logical functions. 

\begin{theorem}[Universal Approximation Theorem for measurable functions(\cite{b1})]\label{thm:UAT}
Equip $\mathbb{R}^n$ with the Lebesgue measure $\mu$.  Let $D \subset \mathbb{R}^n$ be a measurable subset with $\mu(D)<\infty$, and $T$ a finite set.
For any measurable function $f : D \to T$ and $\epsilon>0$, there exists a linear logical function $L:D \to T$ and a measurable set $E \subset D$ with $\mu(E)<\epsilon$ such that $L|_{D-E} \equiv f|_{D-E}$. 

Moreover, there exists a countable family $\mathcal{L}$ of linear logical functions $L_i : D_i \to T$ where $D_i \subset D$ and $L_i \equiv f|_{D_i}$, such that $D \setminus \bigcup_{i=0}^\infty D_i$ is measure-zero set.
\end{theorem}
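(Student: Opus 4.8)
The plan is to first pin down which functions a linear logical function can realize, and then to approximate $f$ by one of them. Because $T=\{t_1,\dots,t_k\}$ is finite, $f$ is completely determined by its level sets $A_j := f^{-1}(t_j)$, which are measurable and partition $D$. On the other side, by Definition~\ref{def: linear logical function} a linear logical function is constant on each leaf of its decision graph, and each leaf is reached through a finite sequence of affine inequalities; hence every linear logical function is constant on each cell of a finite partition of $D$ into convex polyhedra cut out by affine inequalities. In particular, any function that is constant on the dyadic cubes of a fixed grid inside a bounded box and constant (say equal to $t_1$) outside is realizable: a finite decision tree that successively compares one coordinate against a dyadic threshold carves out exactly these cubes, and we attach the desired target to each resulting leaf. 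So it suffices to approximate $f$, up to a set of small measure, by a function that is piecewise constant on dyadic cubes.

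For the first assertion, I would proceed as follows. Since $\mu(D)<\infty$, choose a box $Q=[-R,R]^n$ aligned to the dyadic grid with $\mu(D\setminus Q)<\epsilon/2$. For each refinement level $m$, partition $Q$ into dyadic cubes of side $2^{-m}$, and to each such cube $C$ assign the label $t_j$ maximizing $\mu(C\cap A_j)$ (breaking ties arbitrarily); call the resulting cube-constant function $g_m$, extended by $t_1$ on $\mathbb{R}^n\setminus Q$. By the Lebesgue density theorem, almost every $x\in A_j$ is a density point of $A_j$, so the fraction of the cube $C_m(x)\ni x$ lying in $A_j$ tends to $1$ and in particular eventually exceeds $1/2$; thus $g_m(x)=f(x)$ for all large $m$, i.e. $g_m\to f$ almost everywhere on $D\cap Q$. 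Since $D$ has finite measure, a.e. convergence implies convergence in measure, so for $m$ large enough the set $E':=\{x\in D\cap Q : g_m(x)\neq f(x)\}$ has $\mu(E')<\epsilon/2$. Taking $L:=g_m|_D$ (a linear logical function by the previous paragraph) and $E:=E'\cup(D\setminus Q)$ gives $\mu(E)<\epsilon$ and $L\equiv f$ on $D\setminus E$.

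For the second assertion, I would iterate the first one while shrinking the error. A key observation is that restricting a linear logical function to a measurable subdomain again yields a linear logical function, since the domains are allowed to be arbitrary measurable subsets of $\mathbb{R}^n$. Set $R_{-1}:=D$. Given the measurable residual $R_{i-1}$ with $\mu(R_{i-1})<\infty$, apply the first assertion to $f|_{R_{i-1}}$ with tolerance $2^{-i}$ to obtain a linear logical function $M_i$ with exceptional set of measure $<2^{-i}$; let $D_i:=\{x\in R_{i-1} : M_i(x)=f(x)\}$, let $L_i:=M_i|_{D_i}$, and let $R_i:=R_{i-1}\setminus D_i$, so that $\mu(R_i)<2^{-i}$ and $L_i\equiv f|_{D_i}$ exactly. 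By construction the $D_i$ are disjoint, the $R_i$ are nested decreasing, and $D\setminus\bigcup_i D_i=\bigcap_i R_i$, which has measure $\lim_i\mu(R_i)=0$. This produces the desired countable family $\mathcal{L}=\{L_i\}$.

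The technical heart, and the step I expect to require the most care, is the a.e.\ convergence $g_m\to f$ in the first assertion, which rests on the Lebesgue density theorem applied simultaneously to the finitely many level sets $A_j$; one must verify that the density-point threshold, although depending on $x$, still yields genuine convergence in measure on the finite-measure set $D$. The remaining ingredients are routine: the truncation to a box handles the possibly unbounded but finite-measure domain $D$, and the realizability of a cube-constant function as a finite linear logical graph follows directly from Definition~\ref{def: linear logical graph} by building a finite coordinate-threshold decision tree.
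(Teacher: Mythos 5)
The paper does not actually prove Theorem~\ref{thm:UAT}; it imports the statement from reference~[b1], so there is no in-paper proof to compare against. Judged on its own, your argument is correct and complete in outline. The realizability step is sound: a coordinate-threshold decision tree is a legitimate linear logical graph under Definition~\ref{def: linear logical graph}, and it carves $Q$ into the dyadic cubes of level $m$, so any cube-constant function (extended by a fixed label off $Q$) is a linear logical function. The majority-label argument works because once the density of $A_j$ in $C_m(x)$ exceeds $1/2$ the label $t_j$ is the strict maximizer, and passing from a.e.\ convergence to $\mu(\{g_m\neq f\})\to 0$ via reverse Fatou on the finite-measure set is fine. Two points deserve explicit care. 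First, the cubes $C_m(x)$ are dyadic cubes \emph{containing} $x$, not centered at $x$, so you should invoke the Lebesgue differentiation theorem for families that shrink nicely to $x$ (or the dyadic martingale convergence theorem) rather than the centered-ball density theorem; this is standard but worth a sentence, and you correctly flagged it as the delicate step. Second, Definition~\ref{def: linear logical graph} asks that the chambers \emph{in $D$} correspond bijectively to the outgoing arrows, so when you restrict to a subdomain (as in your iteration for the second assertion) or when a dyadic cell misses $D$, you should prune empty branches to keep the correspondence one-to-one; this is cosmetic. A mild alternative that avoids the density theorem entirely is to use outer regularity of Lebesgue measure to approximate each level set $A_j$ by a finite union of boxes up to measure $\epsilon/2k$ and label the cells of the resulting arrangement, but your dyadic route is equally valid and your bootstrapping for the countable exhaustion (disjoint $D_i$, nested residuals of measure $<2^{-i}$) is exactly what the second assertion requires.
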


We note the following main differences from usual approximations by polynomials or Fourier series.  First, linear logical functions are discontinuous.  Second, the target set is finite here.  In particular, the approximation $L$ exactly equals the target function $f$ in a large portion $D-E$ of $D$.

The second part of the theorem tells us that up to a measure-zero set, the graph of $f$ is covered by the graphs of $L_i$.  This makes an analog of a manifold and motivates the following.
\begin{definition}[Linear Logifold]
	A linear logifold is a pair $\left(X, \mathcal{U}\right)$, where $X$ is a topological space equipped with a $\sigma-$algebra and a measure $\mu$, $\mathcal{U}$ is a collection of pairs $\left(U_i, \phi_i\right)$ where $U_i$ are subsets of $X$ such that $\mu(U_i) >0$ and $\mu(X-\bigcup_{i}U_i)=0$; $\phi_j$ are measure-preserving homeomorphisms between $U_i$ and the graphs of linear logical functions $f_i : D_i \to T_i$, where $D_i \subset \mathbb{R}^{n_i}$ are measurable subsets and $T_i$ are discrete sets.
\end{definition}

To make a simple example, let $S \subset (-\infty,0]$ be any measurable subset, $X \subset \mathbb{R} \times \{1,2,3\}$ be the graph of the function $f$ defined by $f(x)=1$ if $x \in S \cap (-\infty,0]$, $f(x)=2$ if $x>0$, and $f(x)=3$ if $x \in (-\infty,0] \backslash S$. Let $T_1=\{1,2\}$ and $T_2=\{2,3\}$. For $i=1,2$, $f_i := f|_{f^{-1}(T_i)}$ are the restrictions of $f$ on the inverse sets $D_i := f^{-1}(T_i) \subset \mathbb{R}$; $U_i \subset D_i \times T_i$ are the graphs of $f_i$; $\phi_i$ is simply identity. Note that $f_i$ can be expressed as the step function on $\mathbb{R}$ restricted on $D_i$ and hence is a linear logical function, even though $f$ itself is not.

\begin{remark}
\begin{enumerate}
	\item In the above definition, topological structure is necessary to distinguish graphs of different linear logical functions.  
	\item In place of openness condition on local charts for topological manifolds, we make a measure-theoretical requirement $\mu(U_i)>0$, which is more relevant to datasets.
	\item The graphs of $f_i$ are equipped with measures induced from $D_i$.
\end{enumerate}

\end{remark}

\section{Ensemble machine learning and a theoretical example}
Ensemble methods have achieved significant success in many classification problems(\cite{b8,b9}).
These methods improve model performance not only in terms of accuracy but also in computational efficiency.
Essentially, ensemble methods combine multiple classifiers to reduce bias and variability, or to create clearer decision boundaries(\cite{b10}-\cite{b13}).

In ensemble theory, various techniques are employed to enhance performance.
One approach involves training models on evenly divided training datasets, or re-training models on misclassified datasets to form an ensemble (\cite{b10, b12}).
Another technique adaptively adjusts the weights during the final combining step of the ensemble, as models are transferred and the final combining layer is adaptively trained (\cite{b18, b8}).
Additionally, some studies (\cite{b19, b20}) explore finding the best representative point in a dataset using geometric methods.

Since models in an ensemble learning have discrepancies and may disagree with each other, increasing the number of models in an ensemble often degrade the overall accuracy, even if computational costs are ignored. An optimal selection of the ensemble components is necessary (\cite{b19, b20}).  

The above logifold formulation provides a mathematical foundation for ensemble learning.  It emphasizes on keeping track of the domain of each local chart.  Theorem \ref{thm:UAT} ensures that such a local-to-global formulation allows a theoretical zero loss in accuracy (which means the measure of wrongly predicted subset is zero).

Consider a classifier as an approximation of a target function. In the context of linear logical functions, the \emph{model size} can be defined as the number of chambers in its domain. 

Let us fix $N>0$ to be the model size.  Below, we construct a simple example of a logifold, such that an ensemble of any number of models (which are linear logical functions) with size $\leq N$ has accuracy below a fixed bound less than $1$.  This shows the importance of keeping to the domains of local models.

For $E\subset\mathbb{R}$, \emph{the characteristic function} of $E$ is 
\[I_{E}(x) := \begin{cases}
	1 \quad \text{if } x \in E \\
	0 \quad \text{otherwise.}
\end{cases}\]
Let $f : (0,1] \to \{0,1\}$ be a function defined as 
\[f(x) = \sum_{n=0}^{\infty}\left( \frac{(-1)^n + 1}{2} \right)I_{E_n}(x)\]
where $E_n = (2^{-n-1},2^{-n}]$.
The graph of $f$ represents a dataset in $(0,1]\times\{0,1\}$ with countably many `jumps' or `discontinuities' near at $x=0$.

Any linear logical function $f : (0,1] \to \{0,1\}$ must take the form \[\sum_{j=0}^{N}\left( \frac{(-1)^n + 1}{2} \right)I_{A_{j}}(x) \] where $A_j$ are sub-intervals in $(0,1]$ satisfying $\bigcup_{j=0}^{N}A_j = (0,1]$ and $A_j \cap A_k =\emptyset$ for $j\neq k$.

Consider \emph{a family of linear logical functions} $\mathcal{F} = \{f_i : i=1,\ldots,K\}$ be a finite family of functions
\[f_i(x) : = \sum_{j=1}^{N}\left( \frac{(-1)^n + 1}{2} \right)I_{A_{i,j}}(x) \]
where $A_{i,j} = (\alpha_{i,j+1},\alpha_{i,j}], 0=\alpha_{i,N+1} <  \alpha_{i,N} \leq \cdots \leq \alpha_{i,1} \leq \alpha_{i,0} = 1$, and 
\[g_\mathcal{F}(x) := \sigma\circ\left(\frac{1}{K}\sum_{i=1}^K f_i(x)\right)\] where $\sigma(x) = I_{\{x\geq \frac{1}{2}\}}(x)$.
Each $f_i$ stands for a trained \emph{model} (or \emph{classifier}) to the dataset $\Gamma(f)$. Note that each $f_i$ is linear logical function.
Then $g_\mathcal{F}$ serves as the ensemble of $\mathcal{F}$.  The \emph{consistency} of the family $\mathcal{F}$ at $x\in D$ is defined by $\frac{\max(M_0(x), M_1(x))} {K} $.

The upper bound $N$ for the model size represents the number of \emph{decision boundaries}, or discontinuities of $f_i$.  In other words, each classifier has at most $(N+1)$ chambers in its domain. 

Typically, the combiner of an ensemble uses a weighted average.
Alternatively, the effect of weights can be realized by adding more instances of the same logical function in the family.
Let $M_t(x) $ denote the number of functions $f_i \in \mathcal{F}$ such that $f_i(x) = t$ for $x \in (0,1]$ and $t \in \{0,1\}$.

It is typically assumed that each classifier in the ensemble is at least a weak learner, with prediction accuracy as well as random guessing, approximated by $\frac{1}{T}$ where $T$ is the number of targets \cite{b10}.
Therefore, we set $\frac{T+1}{T^2}$ as the least consensus of consisting models of an ensemble. 
The following theoretical example demonstrates that ensemble machine learning fail to fully describe a dataset.
Here, the consistence number $\frac{3}{4}$ is derived from $\frac{T+1}{T^2}$ where $T=2$.
\begin{theorem}

There exists $\delta > 0$ such that 
\[\mu\left(\left\{x \in (0,1]: f(x) = g_\mathcal{F}(x) \right\}\right) \leq 1-\delta\] for any finite family $\mathcal{F}$ of logical functions of size at most $N$ if the consistency of $\mathcal{F}$ is greater than $ \frac{3}{4}$ for all $x \in (0,1]$, where $\mu$ denotes the Lebesgue measure.
\end{theorem}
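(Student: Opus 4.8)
The plan is to first convert the \emph{consistency} hypothesis into a hard bound on the number of discontinuities of the ensemble $g_\mathcal{F}$, and then to reduce the conclusion to an elementary measure estimate for step functions with few pieces. Write $h(x) := \frac{1}{K}\sum_{i=1}^K f_i(x) = M_1(x)/K$ for the fraction of models voting $1$, so that $g_\mathcal{F}(x)=1$ exactly when $h(x)\ge \frac12$. Since each $f_i$ is $\{0,1\}$-valued with at most $N$ discontinuities, its total variation satisfies $\mathrm{TV}(f_i)\le N$, and by subadditivity $\mathrm{TV}(h)\le \frac1K\sum_i \mathrm{TV}(f_i)\le N$.

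Next I would use the consistency condition. The requirement that $\frac{\max(M_0(x),M_1(x))}{K} > \frac34$ for all $x$ is equivalent to $h(x)\notin[\frac14,\frac34]$ for all $x$; in particular $h$ never equals $\frac12$, so $g_\mathcal{F}$ is well defined and switches value only when $h$ jumps across the forbidden band $[\frac14,\frac34]$. On each maximal interval where $g_\mathcal{F}$ is constant, $h$ stays in a single component of $[0,\frac14)\cup(\frac34,1]$, and consecutive such intervals put $h$ in different components; hence each discontinuity of $g_\mathcal{F}$ forces a change of more than $\frac12$ in $h$ at disjoint locations. Comparing with the total variation gives $\frac12\,s < \mathrm{TV}(h)\le N$, where $s$ is the number of discontinuities of $g_\mathcal{F}$, so $g_\mathcal{F}$ is a $\{0,1\}$-valued step function with at most $M:=2N$ discontinuities. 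This uniform bound, independent of $K$, is the heart of the argument.

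It then remains to show that \emph{any} $\{0,1\}$-valued step function $g$ with at most $M$ discontinuities disagrees with $f$ on a set of measure at least some $\delta=\delta(N)>0$. Here I would partition the first $2M+2$ dyadic blocks into the $M+1$ disjoint pairs $P_j := E_{2j}\cup E_{2j+1}$ for $0\le j\le M$. Each breakpoint of $g$ lies in exactly one interval $E_n$, hence in at most one pair, so at most $M$ of the pairs contain a breakpoint; therefore some pair $P_{j^\ast}$ contains none, and $g$ is constant on $P_{j^\ast}$. Since $f$ takes opposite values on $E_{2j^\ast}$ and $E_{2j^\ast+1}$, the constant value of $g$ must be wrong on one of these two blocks entirely, so $\mu(\{f\ne g\})\ge \mu(E_{2j^\ast+1})\ge \mu(E_{2M+1})=2^{-(2M+2)}$. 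Taking $\delta := 2^{-(2M+2)}=2^{-(4N+2)}$ would finish the proof.

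The main obstacle is the first reduction: because the number of models $K$ is unconstrained, the per-model size bound alone does not limit the complexity of $g_\mathcal{F}$, and one really needs the consistency hypothesis to pin down $\mathrm{TV}(h)$ and hence the discontinuity count. A secondary point to treat with care is that the naive idea of estimating the error only on the leftmost chamber $(0,b_1]$ of $g_\mathcal{F}$ does not yield a uniform $\delta$, since $b_1$ can be made small by a clever choice of family; the pairing argument sidesteps this by guaranteeing an error somewhere among the first $2M+2$ dyadic blocks no matter where the breakpoints are placed.
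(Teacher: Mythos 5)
Your proof is correct, and its first half is essentially the paper's own argument in different clothing: you convert the consistency hypothesis into a uniform, $K$-independent bound on the number of discontinuities of $g_\mathcal{F}$ by playing the jump of the vote fraction at each discontinuity of $g_\mathcal{F}$ (more than $\tfrac12$, since $h$ must cross the forbidden band) against the family's total jump budget ($\le N$ per model). The paper runs the same computation with the integer counts $U_i$ and $\Delta_i = |U_{i+1}-U_i|\ge 2\lfloor K/4\rfloor$ summing to at most $KN$; your total-variation phrasing is cleaner and gives the slightly sharper bound $s<2N$ uniformly in $K$, versus the paper's $M_\mathcal{F}\le L$ with $2N-1\le L<3N-1$ (the two agree as $K\to\infty$, as the paper's remark observes). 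Where you genuinely depart from the paper is the second half: the paper asserts the final estimate $\mu(\{f=g_\mathcal{F}\})<1-3\cdot 2^{-3N}$ immediately after bounding the discontinuity count, with no supporting argument, whereas you give a complete pigeonhole proof --- pairing consecutive dyadic blocks $E_{2j}\cup E_{2j+1}$, locating a pair containing no breakpoint, and reading off a disagreement of measure at least $2^{-(2M+2)}$. This is exactly the step that needs care, for the reason you identify: breakpoints can be pushed arbitrarily close to $0$, so no single chamber yields a uniform $\delta$. Your constant $\delta=2^{-(4N+2)}$ is weaker than the paper's claimed $3\cdot 2^{-3N}$, but unlike the paper's it is fully justified (indeed the paper's constant appears too strong as stated: for $N=1$ and $K=1$ a single model jumping at $\tfrac12$ errs only on a set of measure $\tfrac16<\tfrac38$), and for the purely existential statement of the theorem your bound suffices.
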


\begin{proof}
	For any $x \in \mathbb{R}$, define $\lfloor x \rfloor:= \sup \{n \in \mathbb{Z} : n \leq x\}$.
	We denote $K$ be the cardinality of $\mathcal{F}$.
	Let $\mathcal{A}_{\mathcal{F}}$ denotes the set of discontinuities of $g_\mathcal{F}$ and $M_\mathcal{F} := \left|\mathcal{A}_{\mathcal{F}}\right|$.
	Then we can enumerate $\mathcal{A}_{\mathcal{F}}$ as $\{\alpha_1 , \ldots, \alpha_{M_{\mathcal{F}}}\}$ where $1>\alpha_1 > \alpha_2 \cdots > \alpha_{M_{\mathcal{F}}}>0$.
	Define $\alpha_0 = 1$ and $\alpha_{M_{\mathcal{F}}+1}=0$.
	Let $U_i$ be the number of functions $f \in \mathcal{F}$ such that $f(x) = 1$ on the interval $(\alpha_{i+1}, \alpha_i]$ for $i=0, \ldots, M_{\mathcal{F}}$.
	Define $\Delta_i := \left| U_{i+1} - U_i \right|$ as the difference between $U_i$ and $U_{i+1}$ for $i=0, \ldots, M_{\mathcal{F}}$.
	
	By consistency of $\mathcal{F}$, either $U_i>\frac{3}{4}\left\lfloor \frac{K}{4} \right\rfloor$ or $U_i\leq \frac{1}{4}\left\lfloor \frac{K}{4} \right\rfloor$ for all $i$, and $\Delta_i \geq 2\left\lfloor \frac{K}{4} \right\rfloor$.
	Since each function $f \in \mathcal{F}$ can have at most $N$ many discontinuities, we have \[\sum_{i=0}^{M_{\mathcal{F}}} \Delta_i \leq KN,\]  and therefore \[  2\left\lfloor \frac{K}{4} \right\rfloor \left(M_{\mathcal{F}}+1\right) \leq KN\], which implies that $g_\mathcal{F}$ have at most $\frac{KN}{2\left\lfloor \frac{K}{4} \right\rfloor} -1 $ number of discontinuities.
	Since $2N-1 \leq L < 3N-1$ where $L=\frac{KN}{2\left\lfloor \frac{K}{4} \right\rfloor} -1$, we have $\mu( \{ f = g_\mathcal{F}\}) < 1 -3\cdot{2^{-3N}}$.
	
\end{proof}

\begin{remark}
	In the proof, $L$ can be expressed as $2N-1 + \mathcal{O}(K^{-1})$, that means to increase the size of $\mathcal{F}_K$ is to decrease the upper bound to $1-3\cdot 2^{-2N}$.
	Additionally, the function $g_K$ has $\sigma$ at the last `layer', analogous to the index-max function.
\end{remark}

\section{Fuzzy linear logical function and Fuzzy logifold}
Deep neural network model provides a function $f : D \to T$ for classification problems, where the last layer is typically given by the SoftMax function rather than the index-max function, and activation functions are sigmoidal functions valued in $[0,1]$ instead of $\{0,1\}$.
A classifier describes the dataset with fuzziness in its prediction, leading to the formulation of \emph{fuzzy linear logical function and graph}.
\begin{definition}[Fuzzy linear logical function and graph \cite{b1}]
	$G$ be a graph as in definition \ref{def: linear logical graph}.
	Each vertex $v$ of $G$ is equipped with a product of standard simplices
	\[P_v = \prod_{k=1}^{m_v} S^{d_{v,k}}\] where \[S^{d_{v,k}} = \left\{(y_0,\ldots,y_{d_{v,k}}) \in \mathbb{R}_{\geq 0}^{d_{v,k}+1}: \sum_{i=0}^{d_{v,k}} y_i = 1\right\}\]
	for some integers $m_v>0$, $d_{v,k}\geq 0$.  $P_v$ is called the internal state space of the vertex $v$.  Let $D$ be a subset of the internal state space of the source vertex of $G$.  Each vertex $v$ that has more than one outgoing arrows is equipped with an affine linear function 
	\[ l_v: \prod_{k=1}^{m_v} \mathbb{R}^{d_{v,k}} \to \mathbb{R}^j \]
	for some $j > 0$, whose chambers in the product simplex $P_v$ are one-to-one corresponding to the outgoing arrows of $v$.  (In above, $\mathbb{R}^{d_{v,k}}$ is identified with the affine subspace $\left\{\sum_{i=0}^{d_{v,k}} y_i = 1\right\}$ that contains $S^{d_{v,k}}$.)
	Let $L$ denote the collection of these linear functions.
	Moreover, each arrow $a$ is equipped with a continuous function 
	\[ p_a: P_{s(a)} \to P_{t(a)} \]
	where $s(a),t(a)$ denote the source and target vertices respectively.  
	
	We call $(G,L,P,p)$ a fuzzy linear logical graph.  $(G,L,P,p)$ determines a function 
	\[f_{(G,L,P,p)}: D \to P^{\textrm{out}} := \coprod_{l=1}^K P_{t_l}\]  
	as follows.
	Given $x \in D$, the collection $L$ of linear functions over vertices of $G$ evaluated at the image of $x$ under the arrow maps $p_a$ determines a path from the source vertex to one of the target vertices $t_l$.  By composing the corresponding arrow maps $p_a$ on the internal state spaces along the path and evaluating at $x$, we obtain a value $f_{(G,L,P,p)}(x) \in P_{t_l}$.  The resulting function $f_{(G,L,P,p)}$ is called a fuzzy linear logical function.
\end{definition}

	In a feed-forward network model $f$, where the activation function at each hidden layer is the ReLU function $s$ and that at the last layer is SoftMax, we can construct a linear logical graph $(G,L)$ as in section \ref{sec: linear logical function}. The last two layers of vertices in $G$ are replaced by a single vertex $t$, and the affine maps of $L$ at vertices adjacent to the last target $t$ and $t$ itself are excluded, since these vertices do not have more than one outgoing arrow.
	The internal state space $P_v$ at each vertex $v$ is $\left(S^1\right)^{n}$, except for the target vertex $t$, and $P_t := S^d$ where $d$ is the number of target classes. 
	The map $p_a$ is defined to be the identity function for all arrows $a$, except for those targeting $t$.
	On each adjacent vertex of the target $t$, $p_a$ for the corresponding arrow $a$ is defined to be $\text{SoftMax} \circ l$ where $l$ is the restricted affine map on the corresponding chamber in $\mathbb{R}^n$. In this construction, we get $f = f_{(G,L,P,p)}$. 
	
We call \emph{the corners} of the set $P_v$ be state vertices, which takes the form $e_I = (e_{i_1},\ldots,e_{i_{m_v}}) \in P_v$ for a multi-index $I=(i_1,\ldots,i_{m_v})$, where $\{e_0,\ldots,e_{d_{v,k}}\} \subset \mathbb{R}^{d_{v,k}+1}$ is the standard basis.
\begin{definition}[Fuzzy linear logifold \cite{b1}]
	A fuzzy linear logifold is a tuple $(X,\mathcal{P},\mathcal{U})$, where 
	\begin{enumerate}
		\item $X$ is a topological space equipped with a measure $\mu$; 
		\item $\mathcal{P}: X \to [0,1]$ is a continuous measurable function;
		\item $\mathcal{U}$ is a collection of tuples $(\rho_i,\phi_i, f_i)$, where $\rho_i$ are measurable functions $\rho_i: X \to [0,1]$ with $\sum_i \rho_i \leq 1_X$ that describe fuzzy subsets of $X$ 
		whose supports are denoted by $U_i = \{x\in X: \rho_i(x)>0\} \subset X$; 
		$$\textstyle \phi_i: U_i \to D_i \times T_i$$ 
		are measure-preserving homeomorphisms where $D_i \subset \mathbb{R}^{n_i}$ are (Lebesgue) measurable subsets in certain dimension $p_i$; $f_i$ are fuzzy linear logical functions on $D_i$ whose target sets are $T_i$;
		\item The induced fuzzy graphs $\mathcal{F}_i: D_i \times T_i \to [0,1]$ of $f_i$ satisfy
		$
		\mathcal{P} = \sum_i \rho_i\cdot \phi_i^*(\mathcal{F}_i). 
		$
	\end{enumerate} 
\end{definition}

\section{Experiments}
We conducted experiments on the logifold formulation using commonly used dataset such as MNIST \cite{b15}, Fashion MNIST \cite{b16}, and CIFAR10 \cite{b14} dataset. Our experiments aim at comparing our refined voting system based on fuzzy domains with taking average in usual ensemble learning \emph{for the same collection of models}. Our experiments were in small scale and did not take a collection of huge models for comparison with state-of-the-arts models. We wish to carry out experiments in a larger scale and for a broader range of datasets in the future.

\subsection{Experiment Setting}
We trained the models using either a Simple CNN structure or ResNet \cite{b22} for $200$ epochs and with a batch size of $32$.
The learning rate had been set to $10^{-3}$ initially, and then decreased to $0.5\times 10^{-12}$, with the data augmentation. We used the ADAM \cite{b23} optimizer with $\eta = 10^{-3}$. 
The logifold formulation was implemented, and its pseudo-code can be found in \cite{b1}.

For each classifier $M$, we stratify our domain, which is called \emph{fuzzy domain} of $M$(\cite{b1}), according to the certainty of classifier as provided by $M(x)$ to restrict domain of the classifier.
In experiments, we preset our certainty thresholds as $\mathcal{A} =\{0,\sigma(0),\sigma(0.5),\ldots,\sigma(10)\}$ where $\sigma(x)= \left(\exp{(1+e^x)}\right)^{-1}$.
When evaluating a logifold formulation, we use the fuzzy domain of each classifier to combine their predictions for a given instance $x$.
Specifically, a model provides its prediction only when its certainty exceeds a predefined threshold.

With four models (ResNet20 and ResNet56) on the CIFAR-10 testing dataset, using simple averaging, the ensemble achieved an accuracy of 86.97\%. In our logifold formulation \cite{b1}, the accuracy increased to 93.23\%.

\subsection{Experiment 1: Ensemble of six Simple CNN and one ResNet20 on CIFAR10}
The purpose of this experiment is to test whether our logifold program can defend against bad outputs from models with low accuracy. 
We used the test set of $10,000$ images of CIFAR10 for evaluation. 
Six of the models were trained with a Simple CNN structure, and the other was with the version 1 of ResNet20 strucuture.
The Simple CNN models had relatively low accuracy, 56.45\% in average, while the ResNet model achieved 85.96\% accuracy.
Combining them, we obtained the following table \ref{tab1}. The first column of the table shows the certainty thresholds for the fuzzy domain of each model, and the third column shows the number of data points for which the logifold, as the combiner of classifiers, had greater certainty than the certainty threshold. The second column is the accuracy in the restricted domain, not in the entire test dataset.
\begin{table}[htbp]
	\captionsetup{font=myfont}
	\caption{Result of Experiment 1}
	\begin{tabular}{c||c|c|c}
		\centering
		Certainty  & Accuracy  & Accuracy  & The number of \\
		threshold& with refined voting& in certain part& certain data\\
		\hline
		0 & 0.6158  & 0.6158&10000\\
		0.8808 &0.7821   & 0.7821 & 10000 \\
		0.9526 &0.8185   &  0.8946 & 8653 \\
		0.9997 &0.6544  & 0.9984 &2495\\
		\hline
	\end{tabular}\\ \label{tab1}\\
	Simple average: 62.55\%\\ Majority voting: 58.72\%\\
	Our logifold: 84.86\%
\end{table}
As expected, both weighted(or simple) average and majority voting performed poorly due to the predominance of the six Simple CNN models.
In contrast, our refined voting system using fuzzy domains yields about 81.85\% accuracy at the certainty threshold $0.9526$.  By also utilizing validation history \cite{b1} to choose the optimal certainty threshold for each testing data entry, we can achieve 84.86\% accuracy. This shows our logifold have suppressed most of the wrong outputs of the Simple CNN models.

\subsection{Experiment 2: union dataset of MNIST, Fashion MNIST and CIFAR10}
In this experiment, we concatenate the MNIST, Fashion MNIST and CIFAR10 datasets into a single dataset with 30 target classes (by resizing all images to $32\times 32\times 3$ input dimensions). We construct a logifold for
\[\mathbb{R}^{32} \times \mathbb{R}^{32}\times\mathbb{R}^3 \to T = \{m_i,f_i,c_i : i=0,\ldots,9\}\] where $m_i, f_i,$ and $c_i$ denote the target classes of MNIST, Fashion MNIST, and CIFAR10 respectively. The structure of each model is ResNet. The special feature of our logifold is that it consists of models with different target classes. In particular, each model only covers a part of the dataset by design. We show that such a logifold outperforms learning of an ensemble of models which have all the 30 target classes.

Our logifold consists of five types of models, $T, M, F, C,T^{\prime}$ to be explained below.
For the models trained on MNIST and Fashion MNIST, a random selection of 50,000 samples from the respective datasets was chosen because the CIFAR-10 dataset has 50,000 samples in the training dataset.
This selection was then split into training and validation sets with a ratio of $0.2$, resulting in 40,000 samples for training and 10,000 for validation.

First, we trained models $T$  with a concatenated dataset that consists of one-third of each type of data, totaling 39,999 training samples and 9,999 validation samples. Here, type $T$ model refers to a model classifying 30 classes which is the union of classes in MNIST, Fashion MNIST, and CIFAR10.
Its testing accuracy in 30,000 concatenated samples of testing dataset was 76.41\% in average.

Next, we derived a model of type $T^\prime$ using `specialization' method(\cite{b1}) from model of type $T$, which is a classifier trained on the 40,000/10,000 training/validation samples having only three targets $\{m_0,\ldots,m_9\},\{f_{0},\ldots,f_{9}\},\{c_{0},\ldots,c_{9}\}$.
Moreover, we trained models that have partial targets.
Each partial model was trained exclusively on one of the datasets.
When we trained models for MNIST type dataset, RGB channels are randomly given.
Let $M, F, C$ be the following three types:
\begin{enumerate}
	\item $M$ has specialized on MNIST dataset with classes  $\{m_i\}$.
	\item $F$ does on Fashion MNIST dataset with classes $\{f_i\}$.
	\item $C$ does on CIFAR10 dataset with classes $\{c_i\}$.
\end{enumerate}

The testing phase involved a concatenated dataset comprising 30,000 testing samples from MNIST, Fashion MNIST, and CIFAR10.
The logiold of models of type $\{ M, F, C,T^{\prime}\}$ yields an accuracy of 94.90\% at 0.9526 certainty threshold and 94.94\% based on validation history,  which is greater than 82.35\% the accuracy of ensemble of $T$ type models only. See the Table \ref{tab2} below for the detailed experimental result.  
\begin{table}[htbp]
	\captionsetup{ font=myfont}
	\caption{Results of Experiment 2}
	\begin{tabular}{c||c|c|c}
		\centering
		Certainty  & Accuracy  & Accuracy  & The number of \\
		threshold& with refined voting& in certain part& certain data\\
		\hline
		0 & 0.9483  & 0.9483&30000\\
		0.8808 &0.9486   & 0.9486 & 30000 \\
		0.9526 &0.9490   &  0.9508 & 29870 \\
		0.9997 &0.9486  & 0.9942 &19777\\
		\hline
	\end{tabular} \label{tab2}\\ \\
	Single model with full target of 30 classes: 76.41\% in average.\\
	Simple average of four models with 30 classes: 82.35\%. \\
	Our logifold : 94.94\%
\end{table}
\section{Conclusion}


The first experiment shows that even when models of poor performance predominantly occupy our logifold formulation, the main contribution comes from the most well-trained component, unlike usual ensemble methods. The second experiment demonstrates that the logifold formulation can flexibly combine models with different target types; they are local in the sense that their domains are strict subsets of $\mathbb{R}^n \times T$ where $T$ denotes the target set of classes. We utilize such flexibility to achieve higher accuracy (more than 10\%) than usual averaging method of ensemble learning.

In summary, our logifold formulation provides a geometric formulation of ensemble learning, and the refined voting method using fuzzy domains achieves significant improvement in accuracy in experiments.
%
%

%
%
%
%

\end{document}